\documentclass{article}

 \usepackage[preprint]{neurips_2026}

\usepackage[utf8]{inputenc} 
\usepackage[T1]{fontenc}    
\usepackage{hyperref}       
\usepackage{url}            
\usepackage{booktabs}       
\usepackage{longtable}      
\usepackage{amsfonts}       
\usepackage{nicefrac}       
\usepackage{microtype}      
\usepackage{xcolor}         

\usepackage{amsmath}
\usepackage{amssymb}
\usepackage{mathtools}
\usepackage{amsthm}
\usepackage[capitalize,noabbrev]{cleveref}
\usepackage[textsize=tiny]{todonotes}
\usepackage{enumitem}
\usepackage{subcaption}
\usepackage[english]{babel}
\usepackage[autostyle, english = american]{csquotes}

\usepackage{thmtools}
\usepackage{thm-restate}
\usepackage{algorithm}
\usepackage{algpseudocode}

\theoremstyle{plain}
\newtheorem{theorem}{Theorem}[section]

\theoremstyle{definition}
\newtheorem{definition}[theorem]{Definition}

\theoremstyle{remark}

\newcommand{\E}{\mathbb{E}}

\newcommand{\aprime}{{a^\prime}}

\DeclareMathOperator{\nie}{NIE}

\DeclareMathOperator{\pie}{PIE}
\DeclareMathOperator{\te}{TE}

\newcommand{\interaction}{\mathrm{INT}}

\title{The Curse of Multiple Mediators: Hidden Interaction Effects in Activation Patching}

\author{%
  Sankaran Vaidyanathan \\
  University of Massachusetts Amherst\\
  \texttt{sankaranv@cs.umass.edu} \\
  \And
  David Arbour \\
  Adobe Research\\
  \texttt{arbour@adobe.com} \\
  \And
  Aaron Mueller \\
  Boston University\\
  \texttt{amueller@bu.edu} \\
  \And
  Scott Niekum \\
  University of Massachusetts Amherst\\
  \texttt{sniekum@cs.umass.edu} \\
  \And
  David Jensen \\
  University of Massachusetts Amherst\\
  \texttt{jensen@cs.umass.edu} \\
}

\begin{document}
\maketitle


\begin{abstract}

Activation patching is the primary tool in mechanistic interpretability. It attributes causal responsibility for a model behavior to each of its individual components by estimating its \textit{natural indirect effect (NIE)}. Re-deriving the activation patching estimand from causal mediation analysis, we find that the NIE does not solely capture the causal effect through the specific component. It also contains \textit{interaction effects (INT)} that measure how much the component's causal effect itself depends on the state of other components in the model. A natural response may be to try to eliminate INT by adjusting the estimator or unit of analysis, but each of these potential remedies has predictable failure modes. We demonstrate these failure modes in the GPT-2 IOI circuit; components whose causal importance is conditional on the state of other components are either invisible or artificially inflated, and INT variance explains the previously documented instability of faithfulness scores. We prove that INT scales with the distance between clean and patched component activations, is negligible when the model is locally affine, and decomposes combinatorially into pairwise and higher-order group interactions. Despite its inevitability, INT is not a nuisance to be eliminated, but rather a diagnostic for interpretability studies. Its individual and group-level magnitude and sign signal when causal conclusions are prompt-dependent, and when greedy NIE-based component ranking will miss mechanisms only discoverable through combinatorial search.
\end{abstract}
\section{Introduction}

Mechanistic interpretability aims to identify which internal components in a neural network implement a given behavior or capability, and to characterize the causal role of each identified component \citep{olah2020zoom, elhage2021mathematical}. 
Component localization is grounded in \textit{causal mediation analysis}, particularly in isolating how much of the total causal effect flows through each component by computing its \textit{natural indirect effect (NIE)} \citep{pearl2001direct}. In neural networks, \textit{activation patching} \citep{vig2020genderbias, geiger2021causal} operationalizes the NIE by substituting a component's activations with counterfactual values, interpreting the resulting change in output as that component's causal contribution. Activation patching and its derivatives \citep{goldowskydill2023localizingmodelbehaviorpath, syed2024attribution, hanna2024have} are the field's de facto standard instruments for causal attribution, ranking components, and evaluating circuits.

However, documented failures of activation patching appear to contradict its claim as a measure of causal importance, most notably by missing components that are functionally important for a given task \citep{mueller2024missed}. The circuit for indirect object identification (IOI) \citep{wang2023interpretability} is the canonical example of such false negatives; it contains backup name mover heads that only show significant causal effects when ablating all other components that performed the same function. These were only found through combinatorial search and pre-registered hypotheses about the components' function---an approach that does not scale and is no longer standard practice. Circuit evaluations have also encountered other counterintuitive issues, including high variability across ablation methods and prompt distributions \citep{miller2024transformer, meloux2026variance}, non-monotonicity \citep{shi2024hypothesis}, and low-scoring circuits that overlap highly with ground truth \citep{hanna2024have}. While some of these issues have been explained by appealing to the complexity of the underlying model, the assumptions of the measurement procedure themselves have rarely been questioned.

Since \citet{pearl2001direct} proposed the NIE, the causal inference literature has identified failure cases where the NIE does not correctly isolate the effect through a specific component \citep{andrews2021crossworldindependence, vanderweele2014unification}. A key structural condition that guarantees such failure is the presence of recanting witnesses \citep{avin2005identifiability}: variables that lie on both the target path and a bypass route around the mediator. The transformer residual stream guarantees this condition for every mediator by construction: the skip connection at each layer routes information directly from earlier components to later ones, around whatever activation is being patched \citep{elhage2021mathematical}. This gives us principled reason to suspect that the NIE may not isolate component-specific contributions in transformers. 

We prove that the NIE in a transformer contains \textit{interaction effects (INT)} \citep{vanderweele2013ThreewayDecomposition}, which measure how much a component's causal effect itself depends on the state of the rest of the model. INT arises in transformers because activation patching holds one component fixed at its clean-input values while the rest of the model runs on a counterfactual input; the output is then a nonlinear function of both, and the portion of the NIE that cannot be attributed to either alone is INT. Activation patching computes $\nie = \pie + \interaction$, where PIE is the pure indirect effect: the causal effect measured solely through paths that contain the component, which the NIE was assumed to be recovering. INT has been present in every activation patching study. 

In this paper, we study the theoretical properties and empirical consequences of interaction effects in transformer activation patching. Our key findings include the following:

\begin{enumerate}[label=(\alph*), leftmargin=0.65cm]

\item NIE and PIE induce substantially different rankings over component importance across models and datasets (\cref{fig:int-measurement}), with rank correlation as low as $\rho = 0.51$ on the GPT-2 IOI task.

\item INT scales with the distance between clean and patched component activations, and is negligible when the downstream map is locally affine (\cref{thm:no-int-single}). Empirically, INT is very small in tasks where clean and counterfactual prompts are highly similar, and in compiled Tracr \citep{lindner2023tracr} models used in interpretability benchmarks \citep{gupta2025interpbenchsemisynthetictransformersevaluating, mueller2025mib}. 

\item In multi-component patching estimands such as faithfulness scores, INT decomposes combinatorially into individual, pairwise and group-level interactions (\cref{thm:multi-noising-decomp}). The variance of these terms provides a mechanistic explanation for the prompt-level instability of faithfulness scores that \citet{miller2024transformer} documented 
(\cref{fig:faithfulness-decomp}).

\item Interaction effects explain known failure cases of ranking by activation patching scores in the GPT-2 IOI circuit (\cref{sec:ioi-roles}). Backup name mover heads are known to have low NIE scores since other heads perform the same function; this suppression is represented as negative INT. 

\item Removing INT and ranking by PIE introduces ranking failures of its own (\cref{sec:ioi-failures}). On the IOI task in GPT-2, PIE is near-zero for context-specific components such as Duplicate Token and Previous Token heads; their importance depends on other heads being simultaneously active.
 
\end{enumerate}

INT is a fundamental feature of causal mediation, growing in complexity with the size and nonlinearity of modern transformers. Its presence in the NIE propagates downstream to ranking and evaluation procedures, and it provides a unifying theory for several known pathologies in the field. While it does not fully eliminate the need for combinatorial search to resolve these pathologies, INT is measurable at many levels of granularity and enables practitioners to identify when their conclusions can be trusted without that search. That is not a limitation of activation patching; it is a verifiable statement of its assumptions, and a new lens on model mechanisms that single-component effects cannot see.

\begingroup
\renewcommand\thefootnote{}
\let\theHfootnote\relax
\footnotetext{Code available at: \url{https://github.com/sankaranv/mech-interp-mediation-analysis}}
\addtocounter{footnote}{-1}
\endgroup
\section{Background}
\label{sec:background}

Causal mediation analysis (CMA) decomposes the total causal effect of a treatment $A$ on an outcome $Y$ into contributions from specific causal pathways. For treatment $a$ and baseline $\aprime$, the total causal effect is $\te(a, \aprime) = \E[Y(a)] - \E[Y(\aprime)]$. When a \textit{mediator} $M$ lies on some but not all paths from $A$ to $Y$, the natural indirect effect (NIE) measures how much of the total effect passes through $M$.

\begin{definition}[Natural Indirect Effect]
\label{def:nie}
For treatment $a$ and baseline $\aprime$, the natural indirect effect \citep{pearl2001direct} through mediator $M$ is
$$\nie(a, \aprime) = \E[Y(a, M(a))] - \E[Y(a, M(\aprime))]$$
where $Y(a, M(\aprime))$ is the outcome under treatment $a$ with $M$ fixed at its baseline value $M(\aprime)$.
\end{definition}

The NIE is intended to isolate the effect through $M$ and block all paths that bypass it. \citet{avin2005identifiability} formalizes this target as the \textit{path-specific effect (PSE)}: the causal effect propagated along a designated subgraph of the model.

In transformers, mediators are typically activations of model components such as attention heads, MLPs, or sparse autoencoder latents \citep{mueller2026quest}. Because the residual stream introduces a skip connection $x_{\ell-1} \to x_\ell$ at each layer, every mediator has a bypass path. The computation of NIE in transformers goes by various names in the literature such as activation patching \citep{heimersheim2024patching}, causal tracing \citep{meng2022locating}, and interchange intervention \citep{geiger2021causal}; we use \textit{patching} throughout this paper. 

The treatment $a$ and baseline $\aprime$ are conventionally called the \textit{clean} and \textit{corrupted} inputs. This distinction gives rise to two patching estimators.

\begin{definition}[Patching Estimators]
\label{def:patching}
Let $M$ be a set of nodes with treatment (clean) activation $M(a)$ and baseline (corrupted) activation $M(\aprime)$. Activation patching yields the following estimators:
\begin{enumerate}[label=\normalfont(\alph*),nosep]
  \item \emph{Denoising}: run the model on input $\aprime$, then replace $M(\aprime)$ with $M(a)$.
  \item \emph{Noising}: run the model on input $a$, then replace $M(a)$ with $M(\aprime)$.
\end{enumerate}
\end{definition}

The two operations differ in which input provides the baseline: denoising restores components to their clean values within a forward pass under the corrupted input; noising corrupts the components within a forward pass under the clean input. Noising is currently the default approach for measuring component importance in transformers \citep{conmy2023towards, hanna2024have, marks2025sparse}.
\section{Interaction Effects in Transformers}
\label{sec:interaction-single}

Activation patching measures each component's causal contribution by computing its natural indirect effect, but the NIE is not a function of that component's activation alone. Every component operates alongside a bypass: the residual skip connection and same-layer components route information around it to downstream layers by construction~\citep{elhage2021mathematical}. While patching the component's activation changes what it contributes to the residual stream, it leaves the bypass on its original input, and downstream nonlinearities couple both contributions in a way that is difficult to attribute to either alone \citep{avin2005identifiability, andrews2021crossworldindependence}. Proposition~\ref{prop:single-mediator-decomp} identifies how the bypass factors into the causal effect that the NIE was assumed to isolate, and what it actually measures.

\begin{restatable}[Single-Mediator Interaction]{proposition}{singleMediatorInt}
\label{prop:single-mediator-decomp}
Consider a single component $M$ at layer $\ell$, and let $B_M$ be the set of edges that bypass $M$ in the computation graph.

\begin{enumerate}[label=\normalfont(\alph*),nosep]
    \item Denoising computes $\pie = Y(a', M(a)) - Y(a')$, the pure indirect effect, which is the path-specific effect through the subgraph in which the bypass edges $B_M$ are blocked.
    \item Noising computes
          $\nie = \pie + \interaction$, where
          \begin{equation}
            \label{eq:int}
            \interaction = Y(a) - Y(a', M(a)) - Y(a, M(a')) + Y(a')
          \end{equation}
          is the \textbf{mediated interaction} between $M$ and its bypass $B_M$.
    \item The total effect decomposes as $\mathrm{TE} = \pie + \mathrm{PDE} + \interaction$, where $\mathrm{PDE} = Y(a, M(a')) - Y(a')$ is the pure direct effect, i.e., the path-specific effect through the subgraph in which the edge from $M$ to the residual stream is blocked.
\end{enumerate}
\end{restatable}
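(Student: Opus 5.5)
The plan is to reduce everything to algebra on four nested counterfactual quantities, using the computation graph only to justify the path-specific interpretations. Because the transformer forward pass is deterministic given an input and a set of activation overrides, the expectations in \cref{def:nie} can be dropped. Let the counterfactual $Y(x, M(z))$ denote running the model on input $x$ with $M$ overwritten by $M(z)$. We then work with $Y(a)=Y(a,M(a))$, $Y(a')=Y(a',M(a'))$, $Y(a',M(a))$ and $Y(a,M(a'))$.

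\textbf{Part (a).} Under denoising (\cref{def:patching}(a)), the input is $a'$ and $M$ is set to $M(a)$, so the effect computed is $Y(a',M(a))-Y(a')$. To identify it as a path-specific effect in the sense of \citet{avin2005identifiability}, partition the edges out of the input into two groups: edges on paths through $M$, and the bypass edges $B_M$, which comprise the residual skip connection and same-layer components. The subgraph with $B_M$ blocked keeps every bypass edge at its reference value $a'$ and lets only the edges feeding $M$ see $a$. Since $M$ has no incoming edges other than those from $a$, its value in that subgraph is exactly $M(a)$. Every other node downstream of the input, reached through bypass edges, is held at its $a'$-value. The output therefore equals $Y(a',M(a))$, and subtracting the reference $Y(a')$ gives $\pie$.

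\textbf{Part (b).} Noising computes $\nie = Y(a)-Y(a,M(a'))$, as in \cref{def:nie} with treatment $a$. Add and subtract $Y(a',M(a))$ and $Y(a')$:
\begin{equation*}
\nie = \bigl[Y(a',M(a))-Y(a')\bigr] + \bigl[Y(a)-Y(a',M(a))-Y(a,M(a'))+Y(a')\bigr] = \pie + \interaction ,
\end{equation*}
which matches \eqref{eq:int}. To justify calling this the interaction between $M$ and $B_M$, observe that $\interaction$ is the second-order finite difference of $g(u,v)=Y(u,M(v))$ over $u\in\{a',a\}$ and $v\in\{a',a\}$. It therefore vanishes whenever $g$ is additively separable in the bypass input $u$ and the mediator input $v$.

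\textbf{Part (c).} $\mathrm{TE}=Y(a)-Y(a')$, and by telescoping
\begin{equation*}
\mathrm{TE} = \pie + \bigl[Y(a,M(a'))-Y(a')\bigr] + \interaction .
\end{equation*}
The middle term is $\mathrm{PDE}$. Its path-specific reading mirrors part (a): block the edge from $M$ to the residual stream, so that $M$ contributes its reference value $M(a')$, while all bypass edges carry $a$.

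\textbf{Main obstacle.} The algebra is immediate. The real difficulty is the path-specific identification in (a) and (c): we must state precisely that every edge leaving the input is assigned either to the ``through $M$'' group or to $B_M$, so that blocking $B_M$ reproduces exactly the counterfactual $Y(a',M(a))$. To handle this, I would define $B_M$ as all edges on input-to-output paths that avoid $M$, and invoke the edge-subgraph PSE definition of \citet{avin2005identifiability}. Recanting witnesses play no role here, because we only claim what each estimator computes, not that the NIE is identified as a PSE.
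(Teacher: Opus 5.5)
Your algebra for (b) and (c), and your PDE identification in (c), match the paper's proof. The paper writes the four outcomes as $\Phi(r,h)$, $\Phi(r',h)$, $\Phi(r,h')$, $\Phi(r',h')$ for a single map $\Phi$ of the bypass contribution $r$ and the component output $h$.

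\textbf{The gap is in (a).} You rightly call the path-specific identification the crux, but you place the cut at the wrong point in the graph. In a transformer, the edges out of the input cannot be split into ``through $M$'' and ``bypass''. All of them feed the shared residual trunk $x_0,\dots,x_{\ell-1}$, and $x_{\ell-1}$ forks into both $M$ and the skip connection and same-layer heads. Likewise, every edge from $z_\ell$ onward lies on paths of both kinds. Your definition, ``all edges on input-to-output paths that avoid $M$,'' therefore contains the whole upstream trunk and every downstream edge, including the edges into $Y$.

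Blocking that set has two consequences. First, $x_{\ell-1}$ takes the value $x_{\ell-1}(a')$, so $M|_{\mathcal{M}_g}=M(a')$. Second, $Y$ receives only baseline parent values. The PSE thus collapses to $0$ rather than $Y(a',M(a))-Y(a')$. Two related claims also fail:
\begin{itemize}
\item $M$'s only incoming edge comes from $x_{\ell-1}$, not directly from $a$, so ``$M$ has no incoming edges other than those from $a$'' is false in this graph.
\item Nodes ``reached through bypass edges'' are not held at their $a'$-values: $z_\ell$ and its descendants take mixed values.
\end{itemize}
Your remark that recanting witnesses play no role is correct for identifiability, since you can intervene on edges directly. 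But the fork at $x_{\ell-1}$, which makes the residual stream a recanting witness, is exactly what dictates where the cut must go.

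\textbf{The fix is to cut downstream of the fork.} The paper takes
\begin{itemize}
\item $B_M=\{x_{\ell-1}\to z_\ell\}\cup\{H^j_\ell\to z_\ell : j\neq i\}$ for $M=H^i_\ell$, and
\item $B_M=\{z_\ell\to x_\ell\}$ for $M=\mathrm{MLP}_\ell$.
\end{itemize}
More generally, you can take $B_M$ to be the edges lying on \emph{no} input-to-output path through $M$. This differs from your set exactly on the trunk and downstream edges, which lie on paths of both kinds.

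With either choice the identification goes through:
\begin{itemize}
\item Every edge into an ancestor of $M$, including $x_{\ell-1}\to M$, stays in $g$, so $M$ takes the value $h=M(a)$.
\item The blocked edges deliver the baseline bypass contribution $r'$, e.g.\ $x_{\ell-1}(a')+\sum_{j\neq i}H^j_\ell(a')$ for a head.
\item All remaining downstream edges are in $g$, so $Y_a|_{\mathcal{M}_g}=\Phi(r',h)=Y(a',M(a))$, while $Y_{a'}|_{\mathcal{M}_g}=Y(a')$ trivially.
\end{itemize}
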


Proposition~\ref{prop:single-mediator-decomp} establishes that noising and denoising are not symmetrical operations, and actually estimate distinct causal quantities, NIE and PIE. The PIE is referred to as ``pure'' because it corresponds to the surgical removal of bypass edges; it is the causal effect that exclusively flows through the given component. The NIE additionally contains an \textit{interaction} term \citep{vanderweele2013ThreewayDecomposition} that captures how much the component's contribution to the outcome depends on what the bypass is simultaneously carrying. For instance, a positive interaction indicates that the rest of the model amplifies the effect of the component. Ignoring INT does not eliminate it. Activation patching absorbs it entirely into the estimate it computes, thereby inflating or deflating it by the full magnitude of INT.

Interaction is a recognized primary quantity in causal mediation analysis~\citep{vanderweele2015explanation}, and expected to arise whenever treatment and mediator share downstream causal pathways. For example, in a study of the interaction between age and alcohol consumption in their association with \textit{H.~pylori} infection, alcohol consumption was found to be causative for younger individuals but preventative for older adults~\citep{paunio1994alcoholhelicobacter}: the direction of the causal effect depended entirely on the state of a second variable. The causal inference literature documents a rich taxonomy of such interaction patterns~\citep{vanderweele2019interaction}, of which sign reversal is only the most dramatic. \citet{heimersheim2024patching} identified one such pattern in transformers, attributing the noising-denoising gap to AND/OR circuit topology. Interaction is the underlying phenomenon; in \cref{sec:ioi-case-study} we show that the sign and magnitude of INT indicate a wider range of functional roles than AND/OR circuits alone.



\paragraph{INT and Ranking Divergence.}
The practical consequence of $\interaction \neq 0$ is that noising and denoising rank components differently, as shown in \cref{fig:int-measurement} (left). We measure the Spearman rank correlation $\rho(\overline{\nie}, \overline{\pie})$ across all attention heads in GPT-2 small, Pythia-70m, and Qwen2.5-0.5B for five tasks and corruption schemes; all datasets are taken from the repository of \citet{hanna2024have}. Models were selected to match the original circuit papers for each task. We use exact activation patching rather than fast approximations such as attribution patching or integrated gradients, since our goal is to study the fundamental properties of NIE and PIE directly.

Rank agreement between NIE and PIE tracks how semantically similar the clean and counterfactual prompts are. At the high end, $\rho = 0.989$ for SVA \citep{finlayson2021causal}, $0.826$ for Gender Bias \citep{vig2020genderbias}, and $0.919$ for IOI \citep{wang2023interpretability} with symmetric corruption where the name tokens are swapped. These are datasets where the clean and counterfactual prompts are extremely close to each other; for example, an SVA counterfactual changes a single word from singular to plural. On the other hand, $\rho = 0.509$ for IOI with pABC corruption and $0.517$ for Greater Than \citep{hanna2023how}, which use counterfactuals that are more substantially different from the clean prompt.

INT as defined in Equation~\ref{eq:int} is a discrete quantity requiring no additional assumptions about the downstream function. In modern transformers, INT admits a second-order approximation that reveals its relationship to the prompt distance: it scales with the magnitude of the perturbations to the component and its bypass, so when clean and counterfactual prompts are close, INT is suppressed.

\begin{figure}[t]
  \centering
  \begin{minipage}[t]{0.58\linewidth}
    \includegraphics[width=\linewidth]{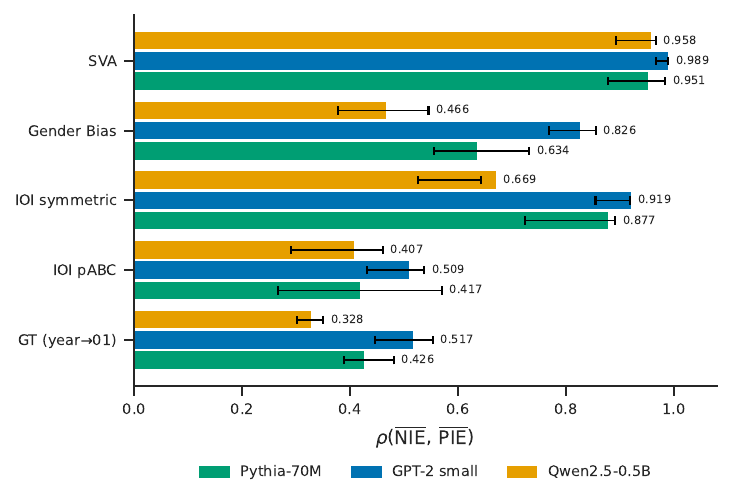}
  \end{minipage}\hfill
  \begin{minipage}[t]{0.41\linewidth}
    \includegraphics[width=\linewidth]{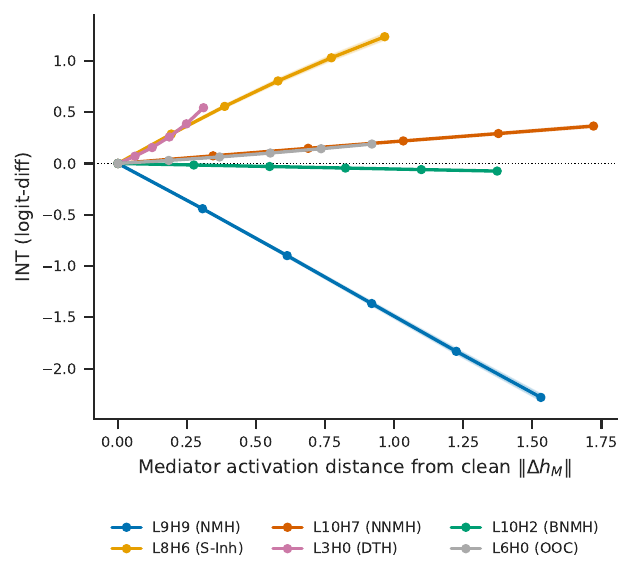}
  \end{minipage}
  \caption{%
    \textbf{Left}. Spearman rank correlation between mean NIE and mean PIE per head across 144 heads in GPT-2 small, Pythia-70M, and Qwen2.5-0.5B. Datasets where INT is large relative to NIE show lower rank agreement between the two estimators.
    \textbf{Right}. INT as a function of the L2 distance between the mediator's patched activation and its clean value for 6 heads from the IOI circuit with different roles. INT grows linearly with patch distance, consistent with \cref{thm:no-int-single}.
  }
  \label{fig:int-measurement}
\end{figure}

\begin{restatable}[Second-Order Approximation of INT]{theorem}{singleMediatorNoInt}
\label{thm:no-int-single}
Let $\delta_M = M(a) - M(a')$ and $\delta_B = B_M(a) - B_M(a')$ be the changes in $M$ and its bypass induced by changing the input from $a$ to $\aprime$. Let $z_\ell(\aprime)$ be the residual stream at layer $\ell$ under $a'$, and let $\Psi$ be the downstream function from $z_\ell(\aprime)$ to the output metric $Y$. When $\Psi$ is twice differentiable, the mediated interaction is the mixed second directional derivative of $\Psi$ at $z_\ell(\aprime)$ in the directions of the perturbations $\delta_M$ and $\delta_B$,
$$\interaction \;\approx\; D^2\Psi \big( z_\ell(\aprime) \big) [\delta_M,\,\delta_B]$$
where $D^2\Psi(z_\ell(\aprime))[\cdot,\cdot]$ is the Hessian bilinear form. The approximation is zero under the following sufficient conditions:
\begin{enumerate}[label=\normalfont(\roman*),nosep]
    \item $\delta_M = 0$ or $\delta_B = 0$, i.e., the mediator or the bypass is unaffected by the change in input.
    \item $\nabla^2\Psi \big( z_\ell(\aprime) \big) = 0$, i.e., the downstream map is locally affine.
    \item $D^2\Psi \big( z_\ell(\aprime) \big) [\delta_M,\,\delta_B] = 0$ with $D^2\Psi \neq 0$, i.e. the perturbations either have no shared components in the curved directions of $\Psi$, or their contributions in shared directions cancel.
\end{enumerate}
\end{restatable}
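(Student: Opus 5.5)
The plan is to write the four terms of \eqref{eq:int} as evaluations of a single downstream map $\Psi$ at residual-stream states obtained by perturbing $z_\ell(\aprime)$ in two separate "slots". Since the residual stream at layer $\ell$ is additive, the mediator's write and the bypass contribution enter as a sum: $z_\ell = z_\ell^{B} + z_\ell^{M}$. Working in the convention where $\delta_M$ and $\delta_B$ are the changes written into the stream by $M$ and by $B_M$ respectively, we set $z_0 = z_\ell(\aprime)$. The four counterfactual runs then become
\[
Y(\aprime)=\Psi(z_0),\qquad Y(\aprime,M(a))=\Psi(z_0+\delta_M),\qquad Y(a,M(\aprime))=\Psi(z_0+\delta_B),\qquad Y(a)=\Psi(z_0+\delta_M+\delta_B).
\]
This identification is the main obstacle. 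We must argue that holding $M$ at $M(\aprime)$ inside a run on input $a$ changes the layer-$\ell$ state only through the bypass. That is exactly the edge-set picture of Proposition~\ref{prop:single-mediator-decomp}: all bypass edges reach the output through the residual stream at or after layer $\ell$. If some bypass edges enter later layers, we absorb them by taking $\Psi$ to act on the joint vector (residual state, later bypass inputs). The argument is unchanged, since only additivity of the two perturbation slots matters. We also note that the signs in \eqref{eq:int} match $\pm\delta$ consistently, because INT is symmetric under swapping base points up to third-order terms.

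With this identification, $\interaction = \Psi(z_0+\delta_M+\delta_B)-\Psi(z_0+\delta_M)-\Psi(z_0+\delta_B)+\Psi(z_0)$, a second-order finite difference. We Taylor expand each term to second order around $z_0$, writing $\Psi(z_0+h)=\Psi(z_0)+D\Psi[h]+\tfrac12 D^2\Psi[h,h]+R(h)$. The constant terms cancel. The linear terms cancel because $D\Psi[\delta_M+\delta_B]-D\Psi[\delta_M]-D\Psi[\delta_B]=0$. In the quadratic terms, bilinearity and symmetry of the Hessian give $\tfrac12\big(D^2\Psi[\delta_M+\delta_B,\delta_M+\delta_B]-D^2\Psi[\delta_M,\delta_M]-D^2\Psi[\delta_B,\delta_B]\big)=D^2\Psi[\delta_M,\delta_B]$. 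The remainders are $o(\|\delta\|^2)$ for $\Psi\in C^2$, or $O(\|\delta\|^3)$ if $\Psi\in C^3$, which justifies ``$\approx$''. Alternatively, we can write INT exactly as $\int_0^1\!\int_0^1 D^2\Psi(z_0+s\delta_M+t\delta_B)[\delta_M,\delta_B]\,ds\,dt$ via the fundamental theorem of calculus applied twice. This form gives the approximation directly by continuity of $D^2\Psi$, and it also shows the linear scaling in $\|\delta_M\|$ observed in \cref{fig:int-measurement}.

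The sufficient conditions then follow from the bilinear form. For (i), a bilinear form vanishes when either argument is zero; indeed the exact double-integral form is identically zero in that case, since the finite difference collapses. For (ii), a zero Hessian makes the form zero; if $\Psi$ is affine on the box spanned by the perturbations, the exact form vanishes too. For (iii), the condition is a restatement of the form being zero. We interpret it by diagonalizing the symmetric Hessian $H=\sum_i\lambda_i u_iu_i^\top$, which gives $\sum_i\lambda_i\langle u_i,\delta_M\rangle\langle u_i,\delta_B\rangle=0$. This holds either when no $i$ with $\lambda_i\neq0$ has both projections nonzero (no shared curved directions), or when the nonzero terms cancel in sum.
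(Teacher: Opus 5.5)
Your proposal is correct and follows essentially the same route as the paper. You identify the four patched outputs as $\Psi$ evaluated at the corners $z_\ell(\aprime)$, $z_\ell(\aprime)+\delta_M$, $z_\ell(\aprime)+\delta_B$, $z_\ell(\aprime)+\delta_M+\delta_B$. You Taylor-expand so that the constant and linear terms cancel and only $D^2\Psi[\delta_M,\delta_B]$ survives, and you read the three conditions off the bilinear form. The differences are refinements: the paper invokes real-analyticity of $\Psi$ to get an $O(\|\delta\|^3)$ remainder, whereas your identity $\interaction=\int_0^1\!\int_0^1 D^2\Psi(z_0+s\delta_M+t\delta_B)[\delta_M,\delta_B]\,ds\,dt$ needs only $C^2$. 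It also bounds the error by $\|\delta_M\|\|\delta_B\|$ times the oscillation of the Hessian on the parallelogram. Finally, it shows $\interaction$ vanishes exactly, not just to second order, under (i), and under (ii) whenever the Hessian vanishes on the whole parallelogram.
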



Figure~\ref{fig:int-measurement} (right) verifies the approximation's prediction on different heads from the IOI circuit with different ground-truth functional roles. As the mediator activation is interpolated toward its counterfactual value, INT grows linearly with perturbation distance $\|\delta_M\|$ across all circuit heads. The slope sign encodes functional role, negative for L9H9 (Name Mover) and positive for L8H6 (S-Inhibition); we expand on this pattern in Section~\ref{sec:ioi-case-study}. \citet{vig2020genderbias} reported an approximately linear decomposition of activation patching for Gender Bias and used it to argue for a no-interaction assumption. Our analysis explains why: their profession-swap corruption produces semantically close clean-corrupt pairs with small $\|\delta_M\|$, satisfying condition~(i) of \cref{thm:no-int-single}. More generally, $\|\delta_M\|$ is small and INT is suppressed when clean and counterfactual prompts are semantically close.

Condition~(ii) holds exactly for Tracr \citep{lindner2023tracr} compiled transformers, which are piecewise affine by construction and used in interpretability benchmarks \citep{gupta2025interpbenchsemisynthetictransformersevaluating, mueller2025mib}. INT vanishes in such settings, meaning these benchmarks likely do not reflect the INT conditions present in real transformers. The magnitude of INT is determined by the evaluation conditions---the model class and counterfactual prompt design---not by the circuit alone.

\section{Multi-Component Interactions}
\label{sec:interaction-multi}

Single-component patching aims to measure causal attribution for each head in isolation, but nearly every mechanistic interpretability study then uses those scores to select or ablate \emph{sets} of components. Circuit faithfulness evaluation, grouping nodes with a common function, and combinatorial knockouts are all examples of multi-component patching operations in common practice. We show that in this setting, interaction effects appear not just at every patched component individually, but also among every pair and subset of components as well --- meaning that individual patching scores cannot predict what happens when those components are patched together.

\begin{restatable}[Multi-Component Interaction]{theorem}{multiMediatorInt}
\label{thm:multi-noising-decomp}
For a subset of model components $\mathcal{C} \subseteq \mathcal{M}$, let $Y(\mathcal{C})$ denote the output of the multi-component noising estimator that runs the model on treatment $a$ and patches $H_i \mapsto H_i(a')$ for all $H_i \in \mathcal{C}$, with $Y(\emptyset) = Y(a)$. The causal effect computed by the multi-component noising estimator decomposes as:
    \begin{equation}
        Y(\emptyset) - Y(\mathcal{C})
        \;=\;
        \sum_{M \in \mathcal{C}} \underbrace{\bigl[Y(\emptyset) - Y(M)\bigr]}_{\mathrm{NIE}(M)}
        \;+\;
        \sum_{\substack{\mathcal{T} \subseteq \mathcal{C} \\ |\mathcal{T}| \geq 2}} \mathrm{xINT}(\mathcal{T}),
        \label{eq:multi-interaction-decomp-partial}
    \end{equation}
    where $\mathrm{xINT}(\mathcal{T}) = - \sum_{\mathcal{R} \subseteq \mathcal{T}} (-1)^{|\mathcal{T}|-|\mathcal{R}|} Y(\mathcal{R})$
    is the cross-component interaction among each subset of nodes $\mathcal{T}$. Each $\nie$ term can be computed with the single-component noising estimator, or further decomposed as
    \begin{equation}
    Y(\emptyset) - Y(\mathcal{C})
    \;=\;
    \underbrace{\sum_{M \in \mathcal{C}} \pie(M)}_{\substack{\text{path-specific} \\ \text{effects}}}
    \;+\;
    \underbrace{\sum_{M \in \mathcal{C}} \interaction(M)}_{\substack{\text{individual} \\ \text{interactions}}}
    \;+\;
    \underbrace{\sum_{\substack{\mathcal{T} \subseteq \mathcal{C} \\ |\mathcal{T}| \geq 2}} \mathrm{xINT}(\mathcal{T})}_{\substack{\text{cross-component} \\ \text{interactions}}}.
        \label{eq:multi-interaction-decomp-full}
    \end{equation}
\end{restatable}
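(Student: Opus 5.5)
The argument is a Möbius inversion on the Boolean lattice of subsets of $\mathcal{C}$, followed by a substitution from \cref{prop:single-mediator-decomp}. Treat $\mathcal{R} \mapsto Y(\mathcal{R})$ as a set function on $2^{\mathcal{C}}$ and define its Möbius transform
\[
g(\mathcal{T}) = \sum_{\mathcal{R}\subseteq\mathcal{T}} (-1)^{|\mathcal{T}|-|\mathcal{R}|}\, Y(\mathcal{R}),
\]
so that $\mathrm{xINT}(\mathcal{T}) = -g(\mathcal{T})$. The Möbius inversion formula on the subset lattice gives $Y(\mathcal{C}) = \sum_{\mathcal{T}\subseteq\mathcal{C}} g(\mathcal{T})$. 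I will verify this by exchanging the order of summation and using $\sum_{\mathcal{S}\subseteq \mathcal{C}\setminus\mathcal{R}} (-1)^{|\mathcal{S}|} = \mathbf{1}[\mathcal{R}=\mathcal{C}]$, which follows from the binomial theorem.

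Next I split the sum by the cardinality of $\mathcal{T}$.
\begin{itemize}
\item The term with $\mathcal{T}=\emptyset$ is $g(\emptyset)=Y(\emptyset)$.
\item The singleton terms are $g(\{M\}) = Y(M) - Y(\emptyset) = -\nie(M)$, because noising $M$ alone is exactly the single-component noising estimator of \cref{def:patching}.
\item The remaining terms, with $|\mathcal{T}|\ge 2$, are $g(\mathcal{T}) = -\mathrm{xINT}(\mathcal{T})$.
\end{itemize}
This gives
\[
Y(\mathcal{C}) = Y(\emptyset) - \sum_{M\in\mathcal{C}} \nie(M) - \sum_{|\mathcal{T}|\ge2}\mathrm{xINT}(\mathcal{T}).
\]
Rearranging yields \eqref{eq:multi-interaction-decomp-partial}. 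The sign convention in $\mathrm{xINT}$ is chosen precisely so that everything enters the difference $Y(\emptyset)-Y(\mathcal{C})$ with a plus sign.

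For \eqref{eq:multi-interaction-decomp-full}, I apply \cref{prop:single-mediator-decomp}(b) to each $M\in\mathcal{C}$ separately. This writes $\nie(M)=\pie(M)+\interaction(M)$, with $\interaction(M)$ given by \eqref{eq:int} for the bypass $B_M$ of that component. Substituting term by term into \eqref{eq:multi-interaction-decomp-partial} gives the three-part sum.

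The only point needing care is a bookkeeping mismatch with the proposition. \cref{prop:single-mediator-decomp} is written as a difference $Y(a)-Y(a,M(a'))$, while here $\nie(M) = Y(\emptyset)-Y(M)$. These agree because $Y(\emptyset)=Y(a)$ and $Y(\{M\}) = Y(a,M(a'))$ by definition of the noising estimator. So the substitution is legitimate, and the cross-component terms are left untouched.

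I do not expect a real obstacle: the identity holds exactly and needs no smoothness assumption, unlike \cref{thm:no-int-single}. The step most prone to error is the sign and index bookkeeping in the inversion, specifically the claim that the singleton Möbius terms equal minus the NIE. I will check this directly on $|\mathcal{C}|=2$. In that case $\mathrm{xINT}(\{M_1,M_2\}) = -Y(\{M_1,M_2\}) + Y(M_1) + Y(M_2) - Y(\emptyset)$, and adding $\nie(M_1)+\nie(M_2)$ recovers $Y(\emptyset)-Y(\{M_1,M_2\})$, as the theorem asserts.
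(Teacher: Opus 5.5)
Your proof is correct and takes essentially the same route as the paper: exchange the order of summation, kill the inner alternating sums with $(1-1)^k=0$, identify the singleton terms as $-\nie(M)$, and then substitute \cref{prop:single-mediator-decomp}(b) term by term. The one difference is cosmetic. You keep $\mathcal{T}=\emptyset$ in the M\"obius sum so that $g(\emptyset)=Y(\emptyset)$ absorbs the constant, whereas the paper uses the centered function $f(\mathcal{T})=Y(\mathcal{T})-Y(\emptyset)$ and handles the $\mathcal{R}=\emptyset$ case separately, which makes your bookkeeping slightly tidier.
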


Analogous to the single-mediator decomposition $\nie = \pie + \interaction$, the multi-mediator decomposition reads as $\nie = \sum \pie + \sum \mathrm{sINT} + \sum \mathrm{xINT}$, where $\mathrm{sINT}$ and $\mathrm{xINT}$ are the individual interactions and cross-interactions respectively. The full proof is in Appendix~\ref{app:proofs}.

Of the three terms, xINT is invisible to per-head measurement and arises specifically when multiple nodes are patched, as it measures how components interact with each other when patched together. Since each component's output is added to the residual stream, the counterfactual activations that are patched at different points in the model compound through shared nonlinearities in a way that individual patches do not. As a result, measuring each head's NIE, PIE, and sINT individually during the localization and ranking process is still not enough to predict what a circuit does collectively.

\begin{figure}[t]
  \centering
  \begin{subfigure}[t]{0.48\linewidth}
    \includegraphics[width=\linewidth]{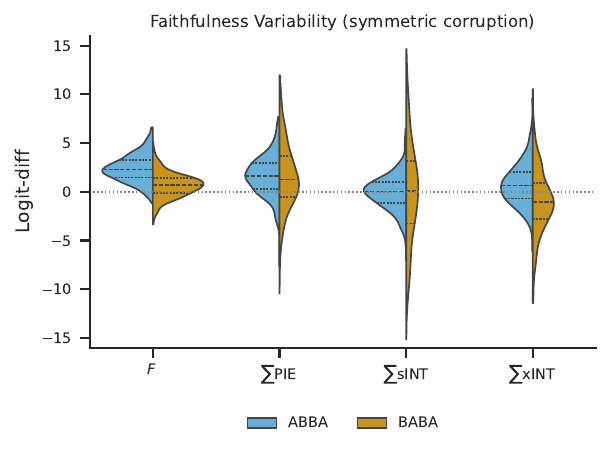}
  \end{subfigure}
  \hfill
  \begin{subfigure}[t]{0.48\linewidth}
    \includegraphics[width=\linewidth]{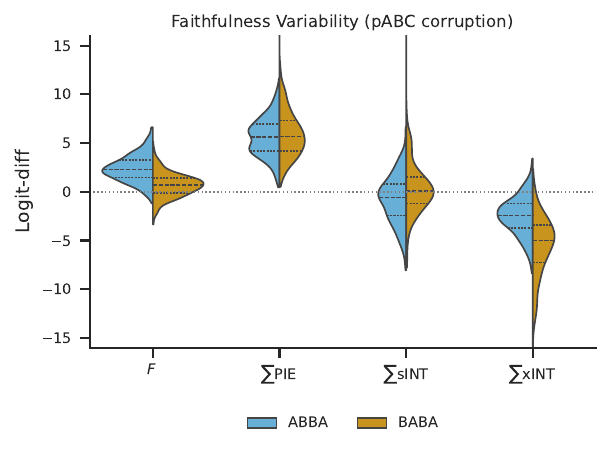}
  \end{subfigure}
  \caption{%
    Faithfulness decomposition $F = \sum\text{PIE} + \sum\text{sINT} + \sum\text{xINT}$ on the IOI task in GPT-2 small. Split violins show per-pair distributions for ABBA and BABA prompt templates. Dotted lines mark quartiles. \textbf{Left (symmetric):} Both INT components are more dispersed than the PIE term despite near-zero mean. \textbf{Right (pABC):} $\sum\text{PIE}$ is large and positive (mean $+5.78$) while $\sum\text{xINT}$ is large and negative (mean $-4.05$); the heads collectively have a lower causal effect than the sum of their individual path-specific contributions. In all cases, BABA shows higher variability than ABBA.
  }
  \label{fig:faithfulness-decomp}
\end{figure}

\paragraph{Explaining Faithfulness Anomalies}

Faithfulness is the most common metric used to evaluate circuit discovery procedures. It aims to measure how much of the full model's performance is recovered by the circuit, usually expressed as a ratio. Since performance is usually expressed as a total causal effect such as logit difference, we express faithfulness as a difference instead --- the gap between the total effect and the multi-component noising effect where all out-of-circuit nodes are patched. This allows us to use the decomposition of \cref{thm:multi-noising-decomp} to analyze how pure indirect effects, individual interactions, and cross interactions contribute to faithfulness and its variability.

Cross-interaction accounts for nearly the entire gap between what individual heads contribute and what the circuit produces jointly, as shown in Figure~\ref{fig:faithfulness-decomp}. Under pABC mean ablation, individual path-specific contributions sum to $\overline{\sum\mathrm{PIE}} = +5.775$ logit-diff. The circuit's jointly measured faithfulness is only $\bar{F} = 1.523$, with $\overline{\sum\mathrm{xINT}} = -4.045$ explaining most of the difference. The circuit collectively produces roughly a quarter of what summed PIEs predict.
 
 
\citet{miller2024transformer} documented a systematic gap between faithfulness scores across IOI prompt templates (ABBA and BABA), as well as high per-prompt variability. We show in Figure~\ref{fig:faithfulness-decomp} that the gap across prompt templates is primarily driven by cross-interactions. Under pABC corruption, $\sum\mathrm{xINT}$ and $\sum\mathrm{sINT}$ are affected by the template difference in opposite directions: $\Delta\sum\mathrm{xINT} = +3.098$ and $\Delta\sum\mathrm{sINT} = -1.116$. The ABBA/BABA asymmetry is not a property of how individual heads process the two templates, but of how they interact when patched together.
 
Both INT terms also carry higher per-pair variance than PIE. Under pABC mean ablation, $\mathrm{SD}(\sum\mathrm{sINT}) = 2.69$ and $\mathrm{SD}(\sum\mathrm{xINT}) = 3.07$ both exceed $\mathrm{SD}(\sum\mathrm{PIE}) = 2.21$. Interaction terms fluctuate more than path-specific contributions because they depend on how clean and corrupted contexts jointly activate the circuit, not on any single head's contribution in isolation.
 
\section{When INT is Interpretable: A Case Study on the GPT-2 IOI Circuit}
\label{sec:ioi-case-study}

The Indirect Object Identification circuit in GPT-2 small \citep{wang2023interpretability} is the only published circuit whose characterization included combinatorial ablation over groups of attention heads, not just single-head NIE rankings. This was required in order to discover backup mechanisms that only activated when other groups of heads were ablated. This methodology predates the field's widespread adoption of greedy activation patching methods, and it is what makes the circuit's functional labels available as a pre-registered baseline against which hypotheses about interaction effects can be tested. Through our study of interactions in the IOI circuit, we demonstrate the following:

\begin{enumerate}[label=(\alph*), leftmargin=0.7cm]
    \item \textbf{INT values across heads carry genuine mechanistic information.} By clustering heads based on the signs and magnitudes of INT and PIE, we recover heads with backup compensation, context-specific mechanisms, and harmful mechanisms.
    \item \textbf{NIE systematically under-ranks backup mechanisms.} This includes the Backup Name Mover heads as well as the primary Name Mover head with the highest PIE.
    \item \textbf{Removing INT introduces completely different ranking failures.} Ranking by PIE excludes context-specific mechanisms that are required for the Name Mover heads to work effectively, including Previous Token, Duplicate Token, and Induction heads.
\end{enumerate}

All data is obtained using the IOI dataset of $N=1000$ samples provided by \citet{hanna2024have}. Mean ablation with prompts from the $p_{ABC}$ distribution is used as the baseline value for NIE computations, as in \citet{wang2023interpretability}. Further details can be found in \cref{app:ioi-case-study}.

\subsection{INT Patterns Across Functional Head Groups}
\label{sec:ioi-roles}

\begin{figure*}[t]
  \centering
  \includegraphics[width=\linewidth]{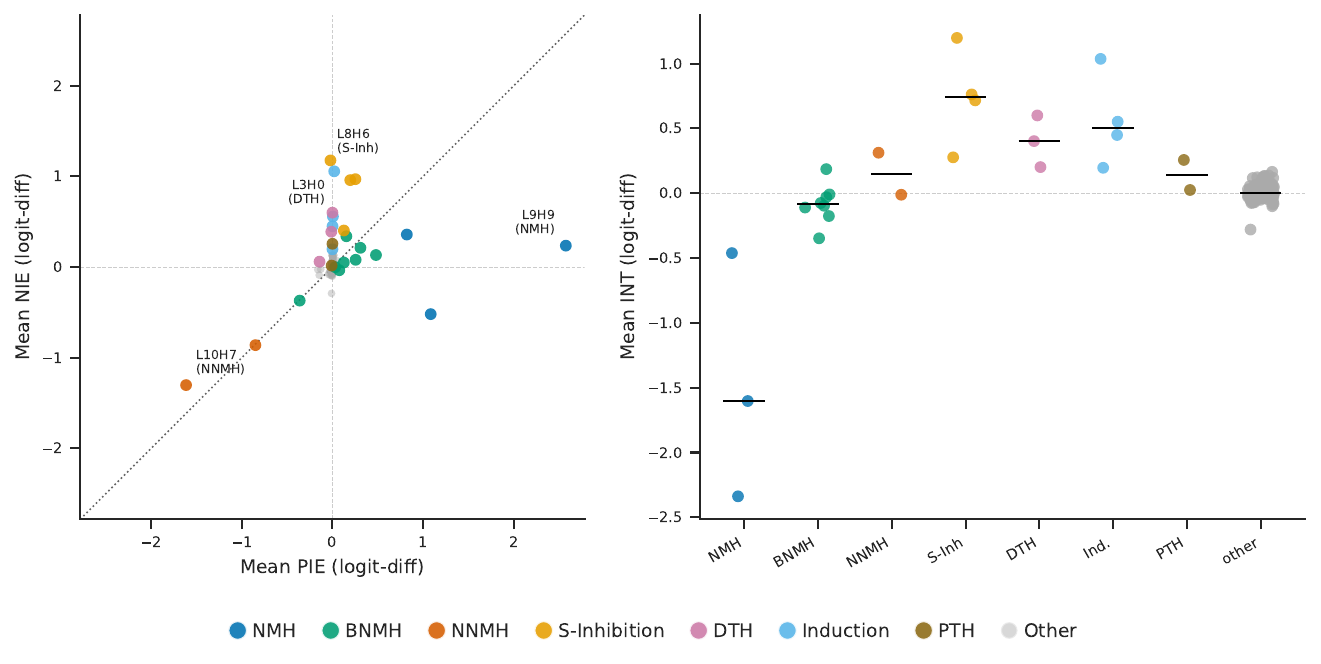}
  \caption{%
    \textbf{Left:} Mean NIE vs.\ mean PIE per head, colored by their functional group from \citet{wang2023interpretability}.
    NMH and BNMH sit below the diagonal; S-Inh, DTH, PTH, and Induction cluster near the $y$-axis (PIE $\approx 0$, $\interaction \approx \nie$); NNMH occupies the negative-PIE region above the diagonal.
    \textbf{Right:} Mean $\interaction$ by functional group; each point is one head, with median INT marked with black lines.
    NMH and most BNMH are negative; S-Inh, DTH, PTH, and Induction are positive; NNMH is moderately positive; the remaining heads form a median-zero cluster.
  }
  \label{fig:ioi-dist}
\end{figure*}

We describe three mechanistically distinct interaction patterns that emerge across the attention heads in GPT-2 small on the IOI task. The NIE, PIE, and INT values are shown in \cref{fig:ioi-dist}.

\paragraph{Backup compensation.}
These are heads with $\interaction < 0$ and $\nie < \pie$; their marginal contribution shrinks when the rest of the circuit sees the clean input, since other heads are active that perform the same function. This pattern is mainly seen in the Name Mover Heads (NMH) and Backup Name Mover Heads (BNMH). Most notably, the name mover head L9H9 (name mover) has the highest $\pie = +2.575$ but nearly equal and opposite $\interaction = -2.338$, resulting in a much lower ranking $\nie = +0.237$.

\paragraph{Context-specific mechanisms.}
These are heads with $\interaction > 0$ and $\pie \approx 0$; their contribution only materializes when specific downstream components are active. This pattern is mainly seen in the Duplicate Token (DTH), Induction, and Previous Token (PTH) heads; \cref{fig:ioi-dist} shows that their PIE values are near-zero and solely contribute through INT. S-Inhibition heads also fit this pattern, though they have slightly higher $\pie$ ($-0.02$ to $+0.25$).

\paragraph{Harmful mechanisms.}
These are heads with $\pie < 0$ and $\interaction > 0$; they directly hurt task performance but are partially opposed by other heads when they are active. This pattern is seen in the two Negative Name Mover heads (NNMH), which have PIEs of $-0.8$ and $-1.6$ and write the wrong answer to the output.

As shown in \cref{fig:ioi-dist} (right), these patterns are largely stable within head groups and structured by sign and magnitude. They also sharpen the AND/OR circuit picture of \citet{heimersheim2024patching}: the same INT sign can result from qualitatively different functional relationships with the other circuit heads. Together, they demonstrate that INT carries information about functional roles that neither NIE nor PIE recover on their own.

\subsection{Diagnosing Activation Patching Results}
\label{sec:ioi-failures}

We showed in \cref{sec:interaction-single} that NIE and PIE rankings on the IOI task in GPT-2 IOI disagree substantially. Here we show that this disagreement is not arbitrary; each estimand systematically under-ranks a different functional group of heads in the circuit. The rankings are shown in Figure~\ref{fig:rankings}.

\begin{figure*}[t]
  \centering
  \includegraphics[width=\linewidth]{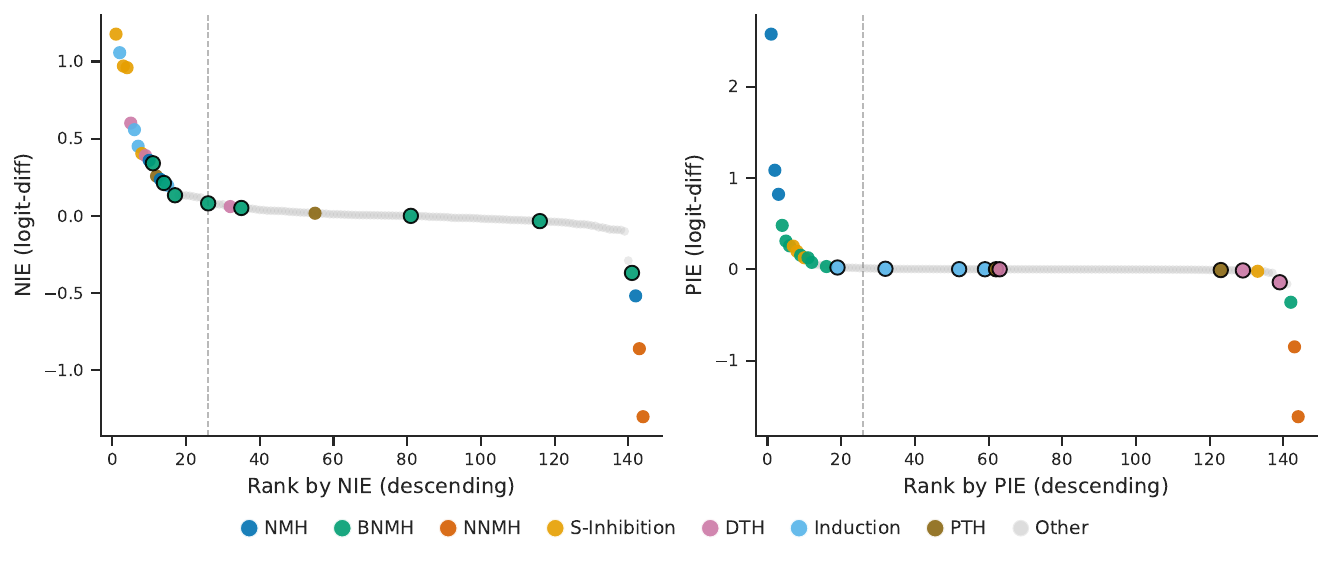}
  \caption{%
    All 144 heads ranked by signed mean (descending); dashed vertical line marks the Wang et al.\ circuit boundary ($k=26$).
    Outlined circles identify the heads each estimand misranks.
    \textbf{(A)} NIE ranking: backup heads (BNMH, outlined) are displaced to ranks 11--116 by $\interaction < 0$ suppression, falling in the lower half of the top-26 circuit despite their functional importance.
    \textbf{(B)} PIE ranking: heads whose contribution is entirely pathway-mediated (DTH, Induction, PTH, outlined) are pushed to ranks 30--130, below every head with positive PIE, even though NIE ranks them in the top 12.
  }
  \label{fig:rankings}
\end{figure*}

\paragraph{NIE underranks the backup tier.}
$\interaction < 0$ results in lower NIE scores relative to their corresponding PIE, which reduces the head's position in the ranking. The PIE rankings for seven of the eight backup name mover heads fall within the top 26 nodes, where 26 is the size of the \citet{wang2023interpretability} IOI circuit. Under NIE, the same nodes are spread across ranks 11 to 116. 

Pairwise interaction alone does not reveal the backup mechanism either. For example, pairwise INT between L9H9 and any individual BNMH head reaches at most $7.1\%$ of L9H9's individual INT. However, cross-interaction between each backup name mover head and the full NMH group is up to $3.8 \times$ higher. This is consistent with \citet{wang2023interpretability}, who selected the backup heads based on whether there was at least a 2\% effect on logit difference after ablating the full NMH group.  

\paragraph{PIE misses context-specific mechanisms.}
At $k = 26$, signed $\nie_{\text{mean}}$ places three Induction heads
at ranks 2, 6, and 7; two DTH heads at ranks 5 and 9; and PTH L4H11 at
rank 12 --- this group occupies the top half of the circuit. However, PIE rankings exclude all of them except Induction L5H5 ($\pie = +0.020$), which barely makes it in at rank 19. These heads support the S-Inhibition mechanism by identifying and routing positional information about the subject token. Under pABC corruption, however, there is no repeated name in the prompt, so the duplicate-token detection that DTH, Induction, and PTH implement has nothing to operate on.

\paragraph{INT reveals a potential competing mechanism}

We observe significant negative pairwise interactions between S-Inhibition and Negative Name Mover heads. For example, NNMH L10H7 has pairwise INTs of $-0.480$ and $-0.606$ with S-Inhibition heads L8H10 and L8H6 respectively. Although these S-Inhibition heads have positive individual INTs ($+0.717$ and $+1.198$ respectively), their negative pairwise INTs with NNMH suggest a different mechanism at play. S-Inhibition suppresses subject-name output while NNMH promotes it, and the negative interaction is consistent with mutual opposition between these head functions. 

While INT values alone do not confirm this type of competing mechanism, they generate a falsifiable hypothesis that could be confirmed through targeted patching, visualization, and ablation experiments of the kind \citet{wang2023interpretability} used to characterize the circuit's other backup mechanisms.

\subsection{The IOI Circuit as Ground Truth}
\label{sec:ioi-conditions}

The IOI circuit is the closest approximation to ground truth in a realistic language model and the most common benchmark for evaluating localization methods. Its credibility rests on manual discovery rather than automated procedures, and on a complete characterization covering token position-level structure, functional roles, and head groups. Several of the design decisions that produced this characterization are no longer in common practice; we describe below why each was essential for grounding INT values in known functional structure.

\paragraph{Counterfactual prompt design.} The pABC prompt was designed to isolate mechanisms in the model related to duplicate token detection and processing. \cref{fig:int-measurement} shows that under symmetric token replacement corruption, INT collapses and NIE rankings reduce to PIE rankings; these would have also left out Duplicate Token, Previous Token, and Induction heads. The pABC counterfactual is what makes those heads visible, since their contributions exist almost entirely through INT.

\paragraph{Genuine functional modularity.}
GPT-2's IOI circuit implements functionally distinct head groups, and INT signs and magnitudes are consistent within those groups. When the underlying functionality is diffuse across components, INTs between those components reflect mixtures of contributions rather than relationships between distinct mechanisms, making them difficult to interpret. The modularity of the IOI circuit was established independently, and cannot be assumed for arbitrary models and tasks.

\paragraph{Prior independent characterization through combinatorial search.} \citet{wang2023interpretability} identified backup mechanisms through combinatorial ablation over groups of heads. This is what made it possible to confirm that INT values track known functional distinctions rather than simply disagreeing with NIE. Measuring INT is not an escape hatch from combinatorial search, but clustering and interpreting interaction values is a step toward constraining and guiding that search.
\section{Discussion and Future Work}
We have demonstrated that non-zero interaction terms (INTs) are likely widespread in interpretability settings (\S\ref{sec:interaction-single}), and can quantitatively characterize and explain many known failure modes in circuit analysis (\S\ref{sec:ioi-case-study}). Interpretability research thus far has implicitly assumed that components can be explained in a modular fashion, and that counterfactual interventions tend to yield direct estimates of component importance. Nonetheless, if one desires a truly modular description of a component's role, then one must explain and control for INTs. Given how ubiquitous redundancy is in neural networks, overdetermination and preemption (and therefore INTs) seem likely to be frequent \citep{mueller2024missed}. The only methods guaranteed to control for INTs will require full combinatorial or recursive search over mediators, rendering exact methods intractable for contemporary models of interest.

This raises a natural question: \emph{should} one control for INTs? While one's initial instinct might reasonably be to do so, we instead follow \citet{ikram2015proposed} in advocating that INTs be embraced as a useful analysis tool in its own right. Indeed, interaction terms have been directly studied and have yielded notable insights in epidemiology settings \citep{vanderweele2014tutorial,vanderweele2019interaction}. In interpretability, natural analogies could include quantifying to what extent certain mediators' effects are prompt-dependent (i.e., interact with the treatment), or to what extent two mechanisms with no component overlap functionally intersect, and how.

To address the intractability of combinatorial search, future work could explore heuristics-based methods that reduce the search space. For example, much of the cleanliness of analyses of the IOI circuit of \citet{wang2023interpretability} derives from the clustering of components by functional role. Can the discovery of functionally similar components be automated, then? Interpretability agents \citep{shaham2024multimodal,kim2025llmspursueagenticinterpretability,haklay2026pitfallsevaluatinginterpretabilityagents} and more fine-grained taxonomies of model representations \citep[e.g.,][]{arad-etal-2025-saes} seem likely to be informative. Additionally, advances in concept geometry have led to more selective steerability with fewer side effects \citep[e.g.,][]{marks2025sparse,sarfati2026shapebeliefsgeometrydynamics}; while not guaranteed to fully control for INTs, this represents a tractable way to reduce their impact on our interpretation of particular model representations.

\label{sec:discussion}


\bibliography{references}
\bibliographystyle{abbrvnat}


\clearpage
\appendix
\onecolumn

\section{Proofs}
\label{app:proofs}

We reproduce the definition of the path-specific effect \citep{avin2005identifiability} for reference. 

\begin{definition}[Path-Specific Effect]
\label{def:pse}
Fix a DAG $G$ and let $g \subseteq G$ be an \emph{effect subgraph} with complement $\bar{g} = G \setminus g$.
The \emph{modified model} $\mathcal{M}_g$ is obtained from $G$ by replacing each $\bar{g}$-parent of every node $V$ with its baseline value $V(\aprime)$.
The \emph{path-specific effect} along $g$ is
$$\mathrm{PSE}_g(a, \aprime) = \E\bigl[Y_a\big|_{\mathcal{M}_g}\bigr] - \E\bigl[Y_{\aprime}\big|_{\mathcal{M}_g}\bigr]$$
\end{definition}


\singleMediatorInt*

\begin{proof}

Consider a transformer where each layer $\ell$ contains multi-head attention followed by an MLP, such that each layer $\ell$ computes:
$$z_\ell = x_{\ell-1} + \sum_{i = 1}^{n_\mathrm{head}} h^i_\ell(x_{\ell-1}) \qquad
x_\ell = z_\ell + \mathrm{MLP}_\ell(z_\ell)$$
where each $h^i_\ell(\cdot)$ and $\mathrm{MLP}_\ell(\cdot)$ absorb their respective LayerNorms.
Let $M$ be a single attention head $H_\ell^i$ or the MLP, such that the edges that bypass $M$ are given by:
$$B_M = \begin{cases}
\{x_{\ell-1} \to z_\ell\} \cup \{H_\ell^j \to z_\ell : j \neq i\} & \text{if } M = H_\ell^i \\
\{z_\ell \to x_\ell\} & \text{if } M = \mathrm{MLP}_\ell
\end{cases}$$

We treat the attention head and MLP cases in parallel.  For $M = H_\ell^i$, let
$$r = x_{\ell-1}(a) + \sum_{j \neq i} H_\ell^j(a), \qquad r' = x_{\ell-1}(a') + \sum_{j \neq i} H_\ell^j(a')$$
denote the total bypass contributions to $z_\ell$ under treatment and baseline respectively, so that $z_\ell(a) = r + h$ and $z_\ell(a') = r' + h'$.

For $M = \mathrm{MLP}_\ell$, set $r = z_\ell(a)$ and $r' = z_\ell(a')$, so that $x_\ell(a) = r + h$ and $x_\ell(a') = r' + h'$.

In both cases, let $\Phi(r, h)$ denote the function mapping bypass contribution $r$ and component output $h$ to the outcome $Y$, and write $h = M(a)$, $h' = M(a')$.

The four canonical outcomes are:
$$\Phi(r, h) = Y(a), \quad \Phi(r', h) = Y(a', M(a)), \quad
  \Phi(r, h') = Y(a, M(a')), \quad \Phi(r', h') = Y(a').$$

\paragraph{Part (a).}
Denoising runs the model on $a'$ and patches $M(a') \mapsto M(a)$, producing the counterfactual output $Y(a', M(a))$ by definition.  The denoising effect is therefore
$$Y(a', M(a)) - Y(a') \;=\; \Phi(r', h) - \Phi(r', h') \;=\; \mathrm{PIE}.$$

To identify the corresponding PSE, consider the effect subgraph $g$ with $\bar{g} = B_M$ and all other edges in $g$.  In $\mathcal{M}_g$, all bypass edges are blocked at their baseline values.  For $M = H_\ell^i$, this means $x_{\ell-1}$ is frozen at $x_{\ell-1}(a')$ and every other head $H_\ell^j$ is frozen at $H_\ell^j(a')$, giving bypass contribution $r'$.  For $M = \mathrm{MLP}_\ell$, $z_\ell$ is frozen at $z_\ell(a') = r'$ directly. In both cases, the edge from $M$ into the residual stream is in $g$, so $M\big|_{\mathcal{M}_g} = h$.  

The residual stream at the layer output is therefore $\Phi(r', h)$, and since all downstream edges are in $g$, $Y_a\big|_{\mathcal{M}_g} = \Phi(r', h) = Y(a', M(a))$. Under baseline $a'$, every variable takes its natural value regardless of $g$, so $Y_{a'}\big|_{\mathcal{M}_g} = Y(a')$. Hence $\mathrm{PIE} = \mathrm{PSE}_g(a, a')$ for this subgraph.

\paragraph{Part (b).} 

Noising runs the model on $a$ and patches $M(a) \mapsto M(a')$, producing
the counterfactual output $Y(a, M(a'))$ by definition.  The noising effect
is therefore
$$\mathrm{NIE} \;=\; Y(a) - Y(a, M(a')) \;=\; \Phi(r, h) - \Phi(r, h').$$





The identity $\mathrm{NIE} = \mathrm{PIE} + \mathrm{INT}$ follows by
direct cancellation:
\begin{align*}
\mathrm{PIE} + \mathrm{INT}
  &= \bigl[\Phi(r',h) - \Phi(r',h')\bigr]
   + \bigl[\Phi(r,h) - \Phi(r',h)
           - \Phi(r,h') + \Phi(r',h')\bigr] \\
  &= \Phi(r,h) - \Phi(r,h') \;=\; \mathrm{NIE}
\end{align*}

\paragraph{Part (c).}
The three-way decomposition follows by direct cancellation from the
four-outcome table:
\begin{align*}
\mathrm{PIE} + \mathrm{PDE} + \mathrm{INT}
  &= \bigl[\Phi(r',h) - \Phi(r',h')\bigr]
   + \bigl[\Phi(r,h') - \Phi(r',h')\bigr] \\
  &\quad + \bigl[\Phi(r,h) - \Phi(r',h)
           - \Phi(r,h') + \Phi(r',h')\bigr] \\
  &= \Phi(r,h) - \Phi(r',h') \;=\; Y(a) - Y(a') \;=\; \mathrm{TE}
\end{align*}
To identify PDE as a PSE, consider the effect subgraph $g$ with
$\bar{g} = \{H_\ell^i \to z_\ell\}$ for an attention head and
$\bar{g} = \{\mathrm{MLP}_\ell \to x_\ell\}$ for the MLP, with all
other edges in $g$.  In $\mathcal{M}_g$, all bypass edges are in $g$
so treatment propagates normally through the bypass, giving $\tilde{r}
= r$.  The component edge is in $\bar{g}$, so $M$ is frozen at $h'$.
Hence $Y_a\big|_{\mathcal{M}_g} = \Phi(r, h')= Y(a, M(a'))$, and
$Y_{a'}\big|_{\mathcal{M}_g} = Y(a')$.  Hence $\mathrm{PDE} =
\mathrm{PSE}_g(a, a')$ for this subgraph.
\end{proof}
\noindent\makebox[\linewidth]{\rule{\linewidth}{0.4pt}}


\singleMediatorNoInt*

\begin{proof}
We first express INT in terms of $\Psi$, then apply a Taylor expansion
to obtain the bilinear form, and finally verify each sufficient condition. Write $h = M(a)$, $h' = M(a')$, $r = B_M(a)$, and $r' = B_M(a')$, so that $\delta_M = h - h'$ and $\delta_B = r - r'$. Since the downstream computation receives $r + h$ as the combined residual stream at layer $\ell$, the four counterfactual outputs from \cref{prop:single-mediator-decomp} are evaluations of $\Psi$ at four points:
\begin{align*}
Y(a)        &= \Psi(r' + h' + \delta_M + \delta_B), &
Y(a', M(a)) &= \Psi(r' + h' + \delta_M), \\
Y(a, M(a')) &= \Psi(r' + h' + \delta_B),            &
Y(a')       &= \Psi(r' + h').
\end{align*}
Substituting into the definition of INT from \cref{prop:single-mediator-decomp}(b):
\begin{equation}
\label{eq:int-psi}
\mathrm{INT} = \Psi(r' + h' + \delta_M + \delta_B) - \Psi(r' + h' + \delta_M) - \Psi(r' + h' + \delta_B) + \Psi(r' + h').
\end{equation}

\paragraph{Real-analyticity of $\Psi$.}
Softmax is a composition of the exponential function and division by a strictly positive sum, hence real-analytic. GeLU is a product of the identity with either the Gaussian CDF or a sigmoid, both of which are real-analytic. LayerNorm is real-analytic because the denominator $\sigma(z) + \epsilon \geq \epsilon > 0$ is bounded away from zero. Since $\Psi$ is a composition of these operations with linear maps, it is real-analytic on $\mathbb{R}^d$.

\paragraph{Taylor expansion.}
Write $H = \nabla^2\Psi(z_\ell(a'))$ and expand each term in \cref{eq:int-psi} around $z_\ell(a') = r' + h'$:
\begin{align*}
\Psi(r' + h' + \delta_M + \delta_B) &= \Psi(r'{+}h') + \nabla\Psi^\top(\delta_M {+} \delta_B) + \tfrac{1}{2}(\delta_M {+} \delta_B)^\top H (\delta_M {+} \delta_B) + O(\|\delta\|^3), \\
\Psi(r' + h' + \delta_M)            &= \Psi(r'{+}h') + \nabla\Psi^\top \delta_M + \tfrac{1}{2}\delta_M^\top H \delta_M + O(\|\delta\|^3), \\
\Psi(r' + h' + \delta_B)            &= \Psi(r'{+}h') + \nabla\Psi^\top \delta_B + \tfrac{1}{2}\delta_B^\top H \delta_B + O(\|\delta\|^3).
\end{align*}
Substituting into \cref{eq:int-psi}, the four $\Psi(r'+h')$ terms cancel and the three first-order terms cancel, leaving only the second-order terms:
\begin{align*}
\mathrm{INT} &= \tfrac{1}{2}(\delta_M {+} \delta_B)^\top H (\delta_M {+} \delta_B) - \tfrac{1}{2}\delta_M^\top H \delta_M - \tfrac{1}{2}\delta_B^\top H \delta_B + O(\|\delta\|^3).
\end{align*}
Expanding $(\delta_M + \delta_B)^\top H (\delta_M + \delta_B) = \delta_M^\top H \delta_M + 2\delta_M^\top H \delta_B + \delta_B^\top H \delta_B$, the diagonal terms $\delta_M^\top H \delta_M$ and $\delta_B^\top H \delta_B$ cancel in pairs, leaving:
\begin{align*}
\mathrm{INT} &= \delta_M^\top H \delta_B + O(\|\delta\|^3)
\end{align*}

Since $\delta_M^\top H \delta_B = D^2\Psi(z_\ell(a'))[\delta_M, \delta_B]$ by definition of the Hessian bilinear form, dropping the $O(\|\delta\|^3)$ remainder gives $\mathrm{INT} \approx D^2\Psi(z_\ell(a'))[\delta_M, \delta_B]$.

\paragraph{Sufficient conditions.}

Each condition ensures $\delta_M^\top H \delta_B = 0$. Condition (i) sets $\delta_M = 0$ or $\delta_B = 0$, condition (ii) sets $H = 0$, and condition (iii) sets $D^2\Psi(z_\ell(a'))[\delta_M, \delta_B] = 0$ by assumption. The qualifier $D^2\Psi \neq 0$ distinguishes condition (iii) from condition (ii): the Hessian is nonzero, meaning the downstream map has genuine curvature at the operating point, but the perturbations $\delta_M$ and $\delta_B$ have no shared components in the directions along which that curvature acts.
\end{proof}
\noindent\makebox[\linewidth]{\rule{\linewidth}{0.4pt}}

\multiMediatorInt*

\begin{proof}

Define the centered set function $f(\mathcal{T}) = Y(\mathcal{T}) - Y(\emptyset)$ for $\mathcal{T} \subseteq \mathcal{C}$, with $f(\emptyset) = 0$. We claim:

\begin{equation}
    f(\mathcal{C}) = - \sum_{\emptyset \neq \mathcal{T} \subseteq \mathcal{C}} \mathrm{xINT}(\mathcal{T})
    \label{eq:centered-set-fn-claim}
\end{equation}

Substituting the definition $\mathrm{xINT}(\mathcal{T}) = - \sum_{\mathcal{R} \subseteq \mathcal{T}} (-1)^{|\mathcal{T}|-|\mathcal{R}|} Y(\mathcal{R})$
and exchanging the order of summation:
\begin{align*}
\sum_{\emptyset \neq \mathcal{T} \subseteq \mathcal{C}} \mathrm{xINT}(\mathcal{T})
&= - \sum_{\emptyset \neq \mathcal{T} \subseteq \mathcal{C}} \;
   \sum_{\mathcal{R} \subseteq \mathcal{T}} (-1)^{|\mathcal{T}|-|\mathcal{R}|} Y(\mathcal{R}) \\
&= - \sum_{\mathcal{R} \subseteq \mathcal{C}} Y(\mathcal{R}) 
   \sum_{\substack{\mathcal{T} : \mathcal{R} \subseteq \mathcal{T} \subseteq \mathcal{C} \\ \mathcal{T} \neq \emptyset}} 
   (-1)^{|\mathcal{T}|-|\mathcal{R}|}
\end{align*}

We evaluate the inner sum by cases. 
For $\mathcal{R} = \mathcal{C}$, the only superset of $\mathcal{C}$ in $\mathcal{C}$ is $\mathcal{C}$ itself, so the inner sum equals $(-1)^0 = 1$, contributing $Y(\mathcal{C})$ to the overall sum. 
For $\emptyset \neq \mathcal{R} \subsetneq \mathcal{C}$, we can drop the constraint $\mathcal{T} \neq \emptyset$ since every superset of a nonempty set is nonempty. 
Letting $k = |\mathcal{C} \setminus \mathcal{R}| \geq 1$ and $j = |\mathcal{T}| - |\mathcal{R}|$:
$$\sum_{j=0}^{k} \binom{k}{j}(-1)^{j} = (1-1)^k = 0,$$
where $\binom{k}{j}$ counts the supersets of $\mathcal{R}$ in $\mathcal{C}$ of size $|\mathcal{R}| + j$. 
For $\mathcal{R} = \emptyset$, the $\mathcal{T} = \emptyset$ term is excluded by the constraint, so the inner sum is:
$$\sum_{j=1}^{k} \binom{k}{j}(-1)^{j} = (1-1)^k - 1 = -1,$$
contributing $-Y(\emptyset)$ to the overall sum. 
Collecting all three cases:
$$ - \sum_{\emptyset \neq \mathcal{T} \subseteq \mathcal{C}} \mathrm{xINT}(\mathcal{T}) = Y(\mathcal{C}) - Y(\emptyset) = f(\mathcal{C})$$
which establishes the claim in \cref{eq:centered-set-fn-claim}.

For singletons $\mathcal{T} = \{M\}$, direct substitution gives $(-1)^0\,Y(\{M\}) + (-1)^1\,Y(\emptyset) = Y(\{M\}) - Y(\emptyset)$. Hence we get:

$$Y(\mathcal{C}) - Y(\emptyset) 
= \sum_{M \in \mathcal{C}} \bigl[Y(\{M\}) - Y(\emptyset)\bigr] 
  - \sum_{\substack{\mathcal{T} \subseteq \mathcal{C} \\ |\mathcal{T}| \geq 2}} 
    \mathrm{xINT}(\mathcal{T}).$$
    
Negating both sides yields \cref{eq:multi-interaction-decomp-partial}.

To recover \cref{eq:multi-interaction-decomp-full}, apply \cref{prop:single-mediator-decomp}(b) to each individual term $Y(\emptyset) - Y(\{M\}) = \mathrm{NIE}(M) = \mathrm{PIE}_M + \mathrm{INT}_M$. The decomposition in \cref{prop:single-mediator-decomp} applies to each $M \in \mathcal{C}$ individually regardless of the other components in $\mathcal{C}$, since $Y(\emptyset) - Y(\{M\})$ is a single-component noising effect. Substituting into the first term of \cref{eq:multi-interaction-decomp-partial} yields \cref{eq:multi-interaction-decomp-full} directly. 

\end{proof}
\noindent\makebox[\linewidth]{\rule{\linewidth}{0.4pt}}

\section{IOI Case Study}
\label{app:ioi-case-study}

All values are from the same dataset used to produce the figures and numerical claims in Section~\ref{sec:ioi-case-study}: 1000 IOI prompt pairs under pABC mean-ablation corruption in GPT-2 small.
CIs are mean $\pm 1.96 \times \mathrm{SD}/\!\sqrt{1000}$ across prompt pairs.
Signed ranks are over all 144 heads, descending.

\subsection{All Circuit Heads: NIE, PIE, and INT Rankings}
\label{app:ioi-full-table}

\cref{tab:ioi-full} includes the NIE, PIE, and INT values for all the in-circuit nodes in \citet{wang2023interpretability}. Of these numbers, \cref{sec:ioi-case-study} highlights the the L9H9 (Name Mover) values, the NIE rank range 11--116 for the seven BNMH heads in the signed PIE top-26, the near-zero PIE values for context-specific heads (DTH, Induction, PTH), and the PIE range $-0.02$ to $+0.25$ for S-Inhibition heads. For 22 of the 26 heads, the per-pair SD of PIE exceeds the absolute mean, reflecting large prompt-to-prompt variability. The CIs on means are narrow because $N{=}1000$.

{\footnotesize
\begin{table}[h]
\centering
\caption{%
  Mean NIE, PIE, and INT ($\pm$ 95\% CI, logit-diff units) for all 26 \citet{wang2023interpretability} IOI circuit heads.
  PIE rank / NIE rank = signed rank by descending mean over all 144 heads.
  Fuzzy heads: $\dagger$.
}
\label{tab:ioi-full}
\resizebox{\linewidth}{!}{%
\begin{tabular}{@{}llrrrrrr@{}}
\toprule
Head & Group
  & \multicolumn{1}{c}{PIE $\pm$ CI} & \multicolumn{1}{c}{PIE rank}
  & \multicolumn{1}{c}{NIE $\pm$ CI} & \multicolumn{1}{c}{NIE rank}
  & \multicolumn{1}{c}{INT $\pm$ CI} \\
\midrule
L9H9    & NMH       & $+2.575 \pm 0.073$ &   1 & $+0.237 \pm 0.037$ &  13 & $-2.338 \pm 0.068$ \\
L9H6    & NMH       & $+1.085 \pm 0.058$ &   2 & $-0.518 \pm 0.027$ & 142 & $-1.603 \pm 0.067$ \\
L10H0   & NMH       & $+0.821 \pm 0.065$ &   3 & $+0.359 \pm 0.032$ &  10 & $-0.462 \pm 0.042$ \\
\midrule
L10H10  & BNMH      & $+0.481 \pm 0.031$ &   4 & $+0.133 \pm 0.016$ &  17 & $-0.348 \pm 0.023$ \\
L10H6   & BNMH      & $+0.309 \pm 0.040$ &   5 & $+0.213 \pm 0.029$ &  14 & $-0.096 \pm 0.018$ \\
L10H1   & BNMH      & $+0.257 \pm 0.029$ &   6 & $+0.081 \pm 0.017$ &  26 & $-0.176 \pm 0.017$ \\
L9H7    & BNMH      & $+0.154 \pm 0.012$ &   9 & $+0.340 \pm 0.019$ &  11 & $+0.186 \pm 0.017$ \\
L10H2   & BNMH      & $+0.126 \pm 0.031$ &  11 & $+0.051 \pm 0.021$ &  35 & $-0.075 \pm 0.019$ \\
L9H0    & BNMH      & $+0.076 \pm 0.013$ &  12 & $-0.034 \pm 0.008$ & 116 & $-0.110 \pm 0.012$ \\
L11H9   & BNMH      & $+0.030 \pm 0.018$ &  16 & $-0.000 \pm 0.015$ &  81 & $-0.030 \pm 0.008$ \\
L11H2   & BNMH      & $-0.361 \pm 0.035$ & 142 & $-0.370 \pm 0.028$ & 141 & $-0.009 \pm 0.019$ \\
\midrule
L11H10  & NNMH      & $-0.849 \pm 0.040$ & 143 & $-0.860 \pm 0.038$ & 143 & $-0.011 \pm 0.015$ \\
L10H7   & NNMH      & $-1.614 \pm 0.064$ & 144 & $-1.301 \pm 0.055$ & 144 & $+0.313 \pm 0.028$ \\
\midrule
L8H10   & S-Inh     & $+0.254 \pm 0.018$ &   7 & $+0.970 \pm 0.033$ &   3 & $+0.717 \pm 0.033$ \\
L7H9    & S-Inh     & $+0.197 \pm 0.013$ &   8 & $+0.960 \pm 0.031$ &   4 & $+0.763 \pm 0.030$ \\
L7H3    & S-Inh     & $+0.127 \pm 0.014$ &  10 & $+0.404 \pm 0.020$ &   8 & $+0.277 \pm 0.021$ \\
L8H6    & S-Inh     & $-0.021 \pm 0.020$ & 133 & $+1.177 \pm 0.058$ &   1 & $+1.198 \pm 0.062$ \\
\midrule
L3H0    & DTH       & $+0.000 \pm 0.009$ &  63 & $+0.601 \pm 0.019$ &   5 & $+0.600 \pm 0.020$ \\
L0H10$^\dagger$ & DTH & $-0.012 \pm 0.007$ & 129 & $+0.390 \pm 0.030$ &   9 & $+0.402 \pm 0.030$ \\
L0H1    & DTH       & $-0.142 \pm 0.021$ & 139 & $+0.060 \pm 0.027$ &  32 & $+0.202 \pm 0.036$ \\
\midrule
L5H5    & Induction & $+0.020 \pm 0.016$ &  19 & $+1.057 \pm 0.033$ &   2 & $+1.037 \pm 0.036$ \\
L5H9$^\dagger$  & Ind.& $+0.006 \pm 0.006$ &  32 & $+0.558 \pm 0.036$ &   6 & $+0.552 \pm 0.036$ \\
L6H9    & Induction & $+0.001 \pm 0.003$ &  52 & $+0.451 \pm 0.033$ &   7 & $+0.449 \pm 0.033$ \\
L5H8$^\dagger$  & Ind.& $+0.001 \pm 0.003$ &  59 & $+0.197 \pm 0.013$ &  15 & $+0.196 \pm 0.013$ \\
\midrule
L4H11   & PTH       & $+0.001 \pm 0.003$ &  62 & $+0.257 \pm 0.023$ &  12 & $+0.257 \pm 0.022$ \\
L2H2    & PTH       & $-0.008 \pm 0.002$ & 123 & $+0.017 \pm 0.009$ &  55 & $+0.025 \pm 0.008$ \\
\bottomrule
\end{tabular}%
}
\end{table}
}

\subsection{Backup Name Mover Heads}

The maximum pairwise cross-interaction between L9H9 and any individual BNMH head is $0.165$ (L10H2), equal to 7.1\% of L9H9's individual INT; for L11H9, cross-interaction with the full NMH group ($-0.065$) is $3.8\times$ the pairwise value ($-0.017$). 

Table~\ref{tab:ioi-bnmh} gives the full set of values of NIE, PIE, and xINT(L9H9), which is the cross-component interaction when L9H9 alone is patched simultaneously with the BNMH head. xINT(NMH) is the same quantity when all NMH heads are patched jointly.

\paragraph{BNMH heterogeneity.}

\citet{wang2023interpretability} observe heterogeneous functionality within the backup name mover group: of the eight heads they label BNMH, four resemble Name Mover Heads, two attend equally to IO and S, one attends more to S1, and one copies S2 negatively. They select these heads by an arbitrary top-eight cutoff on direct effect after knocking out the primary Name Mover Heads, retaining only heads outside other groups whose effect exceeds a low threshold: 2\% of the model's clean baseline logit difference on the IOI task.

Two BNMH heads do not exhibit the backup compensation pattern (INT~$<~0$, NIE~$<$~PIE) described in Section~\ref{sec:ioi-roles}.
L11H2 has mean PIE $= -0.361$: negative because, as \citet{wang2023interpretability} document, it attends more to the subject S1 and copies it rather than the IO token.
Under pABC corruption there is no repeated name, so its backup function cannot engage; instead it contributes negatively.
L11H2 is therefore the single BNMH excluded from the signed PIE top-26.
L9H7 is a second exception: its INT is positive ($+0.186$), so $\nie > \pie$, the opposite of the backup compensation pattern. \citet{wang2023interpretability} describe L9H7 as attending to S2 and writing negatively, suppressing the subject name rather than copying IO. This places it closer to the context-specific pattern of S-Inhibition heads, whose contributions are similarly amplified when the surrounding circuit is intact.
The four heads most clearly displaying backup compensation are L10H1, L10H2, L10H6, and L10H10.

{\footnotesize
\begin{table}[h]
\centering
\caption{%
  BNMH heads: estimand values ($\pm$ 95\% CI), signed PIE/NIE ranks,
  pairwise cross-interaction with L9H9, cross-interaction with the full NMH group,
  and \citet{wang2023interpretability} behavioral profile.
  All logit-diff units; $N=1000$, pABC mean-ablation.
}
\label{tab:ioi-bnmh}
\resizebox{\linewidth}{!}{%
\begin{tabular}{@{}lrrrrrrp{3cm}@{}}
\toprule
Head & PIE $\pm$ CI & PIE rank & NIE $\pm$ CI & NIE rank & xINT(L9H9) $\pm$ CI & xINT(NMH) $\pm$ CI & \citet{wang2023interpretability} profile \\
\midrule
L10H10 & $+0.481 \pm 0.031$ &   4 & $+0.133 \pm 0.016$ &  17 & $-0.057 \pm 0.008$ & $+0.027 \pm 0.016$ & name-mover-like \\
L10H6  & $+0.309 \pm 0.040$ &   5 & $+0.213 \pm 0.029$ &  14 & $-0.080 \pm 0.008$ & $-0.099 \pm 0.011$ & name-mover-like \\
L10H1  & $+0.257 \pm 0.029$ &   6 & $+0.081 \pm 0.017$ &  26 & $-0.021 \pm 0.005$ & $-0.017 \pm 0.009$ & name-mover-like \\
L9H7   & $+0.154 \pm 0.012$ &   9 & $+0.340 \pm 0.019$ &  11 & $-0.040 \pm 0.004$ & $-0.014 \pm 0.005$ & attends to S2; writes negatively \\
L10H2  & $+0.126 \pm 0.031$ &  11 & $+0.051 \pm 0.021$ &  35 & $-0.165 \pm 0.010$ & $-0.255 \pm 0.015$ & attends equally to IO and S; writes both \\
L9H0   & $+0.076 \pm 0.013$ &  12 & $-0.034 \pm 0.008$ & 116 & $+0.013 \pm 0.003$ & $-0.011 \pm 0.004$ & name-mover-like \\
L11H9  & $+0.030 \pm 0.018$ &  16 & $-0.000 \pm 0.015$ &  81 & $-0.017 \pm 0.005$ & $-0.065 \pm 0.008$ & attends equally to IO and S; writes both \\
L11H2  & $-0.361 \pm 0.035$ & 142 & $-0.370 \pm 0.028$ & 141 & $+0.013 \pm 0.009$ & $-0.596 \pm 0.046$ & attends to S1; writes in $W_U[S]$ direction \\
\bottomrule
\end{tabular}%
}
\end{table}
}

\subsection{Pairwise Cross-Interaction between NNMH and S-Inhibition Heads}
\label{app:ioi-pairwise}

All eight pairwise xINT values in \cref{tab:nnmh-sinh-xint} are negative: patching any NNMH and S-Inh pair jointly suppresses their combined contribution.
This holds across both NNMH heads and all four S-Inh heads, not only the two pairs cited in the text, supporting the competing mechanism hypothesis in Section~\ref{sec:ioi-failures}.
L10H7 shows the larger magnitudes (up to $-0.606 \pm 0.028$ with L8H6), consistent with its larger individual INT ($+0.313 \pm 0.028$); L11H10's individual INT is near zero ($-0.011 \pm 0.015$) and its pairwise interactions are correspondingly smaller but still negative throughout.

\begin{table}[h]
\centering
\caption{%
  Pairwise cross-interaction xINT between each NNMH and each S-Inhibition head
  (logit-diff units, mean $\pm$ 95\% CI, $N{=}1000$ pairs, pABC mean-ablation).
  Individual INT values for NNMH and S-Inh heads are in Table~\ref{tab:ioi-full}.
}
\label{tab:nnmh-sinh-xint}
\begin{tabular}{@{}lrr@{}}
\toprule
S-Inh head & xINT with L10H7 $\pm$ CI & xINT with L11H10 $\pm$ CI \\
\midrule
L7H3  & $-0.177 \pm 0.013$ & $-0.086 \pm 0.007$ \\
L7H9  & $-0.362 \pm 0.014$ & $-0.245 \pm 0.010$ \\
L8H6  & $-0.606 \pm 0.028$ & $-0.387 \pm 0.021$ \\
L8H10 & $-0.480 \pm 0.029$ & $-0.218 \pm 0.012$ \\
\bottomrule
\end{tabular}
\end{table}



\end{document}